\renewcommand\footnotetextcopyrightpermission[1]{} 
\renewcommand\@formatdoi[1]{\ignorespaces}
\newcommand{\X}{\mathbf{X}}
\newcommand{\R}{\mathbb{R}}
\newcommand{\xl}{\mathbf{x^l}}
\newcommand{\xr}{\mathbf{x^r}}
\newcommand{\SB}{\mathbf{S}}
\newcommand{\data}{{\em data faithful }}
\newcommand{\objective}{{\em objective faithful }}
\newcommand{\fig}[1]{Fig.(\ref{#1})}
\newcommand{\sect}[1]{Section \ref{#1}}
\newcommand{\eref}[1]{Eq.(\ref{#1})}
\newtheorem{theorem}{Theorem}
  \providecommand\BibTeX{{%
    \normalfont B\kern-0.5em{\scshape i\kern-0.25em b}\kern-0.8em\TeX}}}
\title{Simple is better: Making Decision Trees faster using random sampling}
\author{Vignesh Nanda Kumar}
\email{vignesh.nandakumar@aexp.com}
\affiliation{%
 \institution{AI Labs, American Express}
 \country{India}
}
\author{Narayanan U Edakunni}
\email{narayanan.u.edakunni@aexp.com}
\affiliation{%
  \institution{AI Labs, American Express}
  \country{India}
}
\begin{abstract}
In recent years, gradient boosted decision trees (GBDT) have become popular in building robust machine learning models on big data. The primary technique that has enabled these algorithms’ success has been distributing the computation while building the decision trees. A distributed decision tree building, in turn, has been enabled by building quantiles of the big datasets and choosing the candidate split points from these quantile sets. In XGBoost, for instance, a sophisticated quantile building algorithm is employed to identify the candidate split points for the decision trees. This method is often projected to yield better results when the computation is distributed. In this paper, we dispel the notion that these methods provide more accurate and scalable methods for building decision trees in a distributed manner. In a significant contribution, we show theoretically and empirically that choosing the split points uniformly at random provides the same or even better performance in terms of accuracy and computational efficiency. Hence, a simple random selection of points suffices for decision tree building compared to more sophisticated methods.
\end{abstract}
\begin{document}
\maketitle
\pagestyle{plain}

\section{Introduction}
XGBoost\cite{xgboost} algorithm's popularity has been mainly due to its ability to build effective GBDT models on large and complex data sets. Some of the techniques that allow it to build models efficiently are - distributed tree building, optimized data structures, and out-of-core computation. Out of these, the most important feature is building a decision tree in a distributed manner. The main bottleneck in building a decision tree in a distributed fashion is to determine the optimal value at which the node needs to be split. For the split finding step, a greedy algorithm iterates over all feature values, computing gain to find the best split point at a particular node. This is done for all nodes of the decision tree, making it inefficient. To alleviate this, there exist approximate split finding algorithms which speedup the split finding by enumerating only a limited set of data points for each feature. These algorithms summarize all feature values to a smaller set and then find the best split point from the summarized set. The summarization was previously applied to situations where decision trees were built on big datasets, or decision trees were built on streaming data\cite{spies,spdt}. These techniques for building decision trees have been adopted to more powerful GBDT methods like XGBoost\cite{xgboost}, CATBoost\cite{catboost}, and LightGBM\cite{Ke2017LightGBMAH}. These boosting algorithms build models using many decision trees; hence, any efficiency obtained in building decision trees is magnified in the overall ensemble model. Therefore, these GBDT algorithms deal with big data by efficiently building the individual decision trees in the ensemble model. For instance, in the original paper on XGBoost\cite{xgboost}, the authors come up with a weighted quantile building step that is reported to improve the accuracy of finding the split and at the same time constructing the decision trees efficiently. The paper claims this feature is a novel contribution of XGBoost compared to rest of its contemporaries. In XGBoost and algorithms similar to it, the data is summarized by sophisticated quantile building procedures, making sure that the data summary is faithful to the original data. In this paper, we show theoretically and empirically that {\em such methods of constructing sparse datasets using sophisticated quantile building procedures do not offer any more advantage in the accuracy of the ensembles built, compared to a simple random selection of points as candidate splits for a decision tree}. The results in this paper would help practitioners to build simple and scalable algorithms to build decision trees for big data.

The paper starts out examining previous work on efficiently building decision trees and the use of some of these techniques in GBDT methods, in Section \ref{sec:related}. We then provide a precise mathematical framework to explain the problem at hand and compare the efficacy of different quantization algorithm in Section \ref{sec:math}. This is followed by Section \ref{sec:rand} where we show that the sophisticated quantization algorithm is equivalent to a random selection of split candidates. Section \ref{sec:eval} provides an empirical comparison of XGBoost using a simple random sampling for data summarization with the original version of XGBoost with sophisticated quantile building steps.

\section{Related work}
\label{sec:related}
The decision tree is a useful ML technique that recursively partitions the input space and assigns specific class labels to different partitions. The partitions are constructed by first choosing the feature along which a partition needs to be created or equivalently a split needs to be created in the decision tree. This is then followed by choosing the value of the feature on which the split needs to be created. The split finding step needs to scan through the entire set of data points to determine the optimal feature and the appropriate value to split on. The split is chosen such that, by splitting on a particular node of the tree, the objective function defined over the data is maximised. Typical objective functions used in decision trees are GINI and entropy. The split point is chosen so that splitting at the particular value would give the maximum increase of the objective function compared to all other candidate splits. The split finding procedure is inefficient, especially when dealing with big data, since for every split, we need to scan over the entire data. There have been many algorithms that try to overcome this inefficiency by considering a subset of the data instead of the whole dataset. These algorithms fall into two different categories based on the way it constructs the subsets.
\begin{enumerate}
\item Data faithful: This class of algorithms typically try to build the subset of data such that the subset mimics the actual data as closely as possible. Hence we call these algorithms \data. These include algorithms like SS in CLOUDS\cite{clouds}, SPDT\cite{spdt}, XGBoost\cite{xgboost}, CATBoost\cite{catboost}, and LightGBM\cite{Ke2017LightGBMAH}. Here we have included boosting methods like XGBoost, CATBoost, and LightGBM along with decision tree building algorithms like CLOUDS and SPDT since the boosting algorithms build an ensemble of decision trees and use these approximate methods to build trees efficiently. Out of these, the SS variant of CLOUDS and XGBoost uses quantile approximation to build a subset over the training data. CATBoost has different possible ways of quantization provided as a choice to the user. LightGBM uses a histogram tree to construct the bins. In a quantile approximation, the idea is to construct bins over the data such that all of the training points fall into each bucket. Usually, separate bins are constructed for individual features of the data. Once the bins are constructed, a representative data point is chosen for each bin, serving as a split candidate.
\item Objective faithful: In this class of algorithms, the focus is to choose candidate splits so that they best represent the objective function optimized by the decision tree optimizer. However, it is challenging to represent the objective function because we do not have any knowledge about the objective. An algorithm like SSE of \cite{clouds} builds a subset of the training data in multiple rounds. The subset is constructed through a heuristic which tries to make the subset of data points faithful to the objective function and tries to include optimal data point in the subset. SPIES \cite{spies} is another such algorithm. The idea is to choose the split points such that there is a good chance that the best split point is included in the subset, hence ending up with a decision tree that is equivalent or close to the tree that would have been built if all of the data points are chosen.
\end{enumerate}
From the definitions, it is clear that the \data algorithms do not consider objective function while building the subset and entirely focus on approximating the input space. In this effort, these algorithms can end up choosing sub-optimal subsets as candidate splits. Representing the input space accurately does not guarantee that it would be able to capture splits that can potentially improve on the objective. This situation is illustrated in \fig{fig:mismatch}. On the other hand, \objective algorithms spend substantial effort choosing the approximating subset and hence are computationally inefficient. It is also not clear how to find the optimal splits over a completely unknown objective without actually evaluating the functions over the data points. In the following section, we quantify the expected error made by these algorithms in choosing the candidate splits while constructing the approximating subset and compare it with the simple approach of building the subset by sampling points uniformly at random.
\begin{figure}
\includegraphics[width=0.4\textwidth]{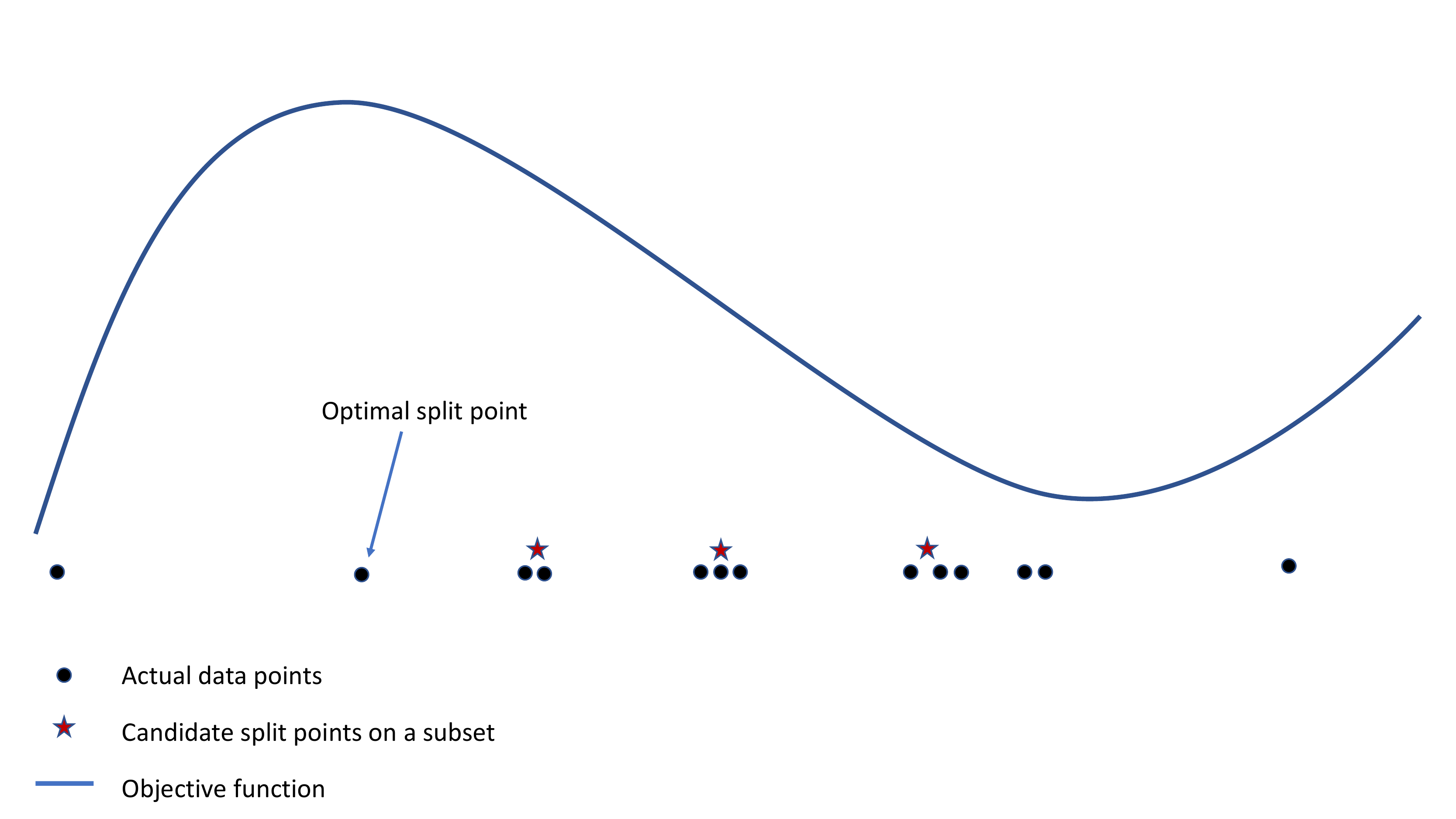}
\caption{Illustrates the situation where the \data algorithms try to be faithful to the data but are not able to include the optimal split point with respect to the objective. There are 13 data points which are grouped into 3 bins. The representatives of these bins are given by the red stars. As illustrated, the bins miss the optimal split point corresponding to the maximum of the objective function.}
\label{fig:mismatch}
\end{figure}

\section{Mathematical framework for choosing the subset}
\label{sec:math}
We now provide a precise mathematical definition of the subset and the role it plays. Let $\X = x_1, x_2, x_3 \hdots x_n$ be set of data points over which we would like to learn a function, using a decision tree $T$. Without loss of generality, we assume $x_1 \leq x_2 \leq x_3 \hdots \leq x_n$. We define a ranking on $\X$ over an objective function $f: \X \rightarrow \R$ where $f$ is a tree score function that provides a greedy measure of the quality of the split and is usually defined on the partition of the points. The objective is to find $x_{max}$ such that $f(\{x_1 \hdots x_{max}\},\{x_{{max} + 1} \hdots x_n\})$ is the maximum amongst all the ordered partitions defined on $\X$. Conventional algorithms to build decision trees iterate through $i=1 \hdots n$ to find the maximum value of $f(\xl_i,\xr_{i})$ where $\xl_i = \{x_1 \hdots x_i\}$ and $\xr_i = \{x_{i+1} \hdots x_n\}$. For ease of notation, we express $f(\xl_i,\xr_{i})$ as $f(x_i)$. The linear scan algorithm is inefficient when $n$ is large. An efficient alternative to this algorithm is to choose a subset of $\X$ to evaluate $f$. Let us denote the subset chosen by the approximation algorithm to be $\SB$ such that $\SB \subset \X$ and the element in $\X$ that maximises $f$ is $x^*$. We use $\SB$ as a set of candidate splits. Evaluating the splits over a smaller set $\SB$ compared to $\X$ would be more efficient. The efficacy of the approximation algorithm is measured by the ability of the algorithm to include $x^*$ in $\SB$. Less drastic measure would be to measure the {\em rank error} of $\SB$ with respect to $\X$ by finding the element in $\SB$ that has the maximum value of $f$, say, $x^*_{\SB}$. The rank of $x^*_{\SB}$ in $\X$ would give the error induced by the approximation algorithm. Let us denote the rank error as $R_f(\SB,\X)$ which takes a value between $0$ and $n-1$. Hence, if $\SB$ contains $x^*$, $R = 0$, if $\SB$ does not contain $x^*$ but contains the next best, then $R = 1$ and so on. We can formally define $R$ as -
\begin{equation}
R = \frac{\text{rank of the highest ranked element in S}}{|S|-1}
\end{equation}
where $|S|$ is the cardinality of $\SB$.

There are usually two approaches employed by various algorithms when constructing $\SB$ from $\X$ as described in Section \ref{sec:related}. In the \data approach $\SB$ is chosen to be {\em faithful} to $\X$ through some appropriate definition of faithfulness. In the \objective approach, $\SB$ is constructed to be faithful to $f$ wherein the attempt is to include the best possible data point in $\SB$. It must be noted that $R$ is always defined over a ranking of the data points over $f$. 
We can now quantify the expected rank error if we were to construct $\SB$ using uniform random sampling and use this as the baseline to compare more sophisticated ways of choosing $\SB$.
\subsection{Random selection of bins}
\label{sec:rand}
We start by examining a simple approximation scheme where $\SB$ is constructed by selecting elements uniformly random from $\X$. We begin with a theorem that provides a measure of the expected rank error of a set $\SB$ chosen uniformly random from $\X$.

\begin{theorem}
\label{thm1}
The expected rank error over all possible subsets of $\X$ of cardinality $k$ is $\frac{1}{k+1}$, when the subsets are chosen uniformly at random.
\end{theorem}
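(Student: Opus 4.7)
The plan is to reindex the data points by their $f$-rank so the theorem reduces to a standard combinatorial question about random subsets of $\{0,1,\ldots,n-1\}$, and then apply the normalization in the paper's definition of $R$. Concretely, I would sort $\X$ so that the element of rank $r$ is the $(r{+}1)$-st best under $f$, and let $R^{\mathrm{raw}}$ denote the minimum rank occurring in a uniformly random $k$-subset $\SB$; this is exactly the unnormalized rank error described just before the theorem.

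For the core computation, the cleanest route is a gap/symmetry argument. A random $k$-subset of $\{0,1,\ldots,n-1\}$ partitions the $n-k$ unchosen positions into $k+1$ gaps: one before the smallest chosen position, $k-1$ strictly between consecutive chosen positions, and one after the largest. The map from $k$-subsets to gap-sequences $(g_0,g_1,\ldots,g_k)$ of non-negative integers summing to $n-k$ is a bijection, so all such gap-sequences are equally likely, and by permutation symmetry each gap has expected size $(n-k)/(k+1)$. Since the leading gap $g_0$ equals $R^{\mathrm{raw}}$, this yields $\mathrm{E}[R^{\mathrm{raw}}] = (n-k)/(k+1)$. As a sanity check I would also rederive this via the tail sum $\mathrm{E}[R^{\mathrm{raw}}] = \sum_{j=1}^{n-k}\Pr(R^{\mathrm{raw}}\geq j) = \sum_{j=1}^{n-k}\binom{n-j}{k}/\binom{n}{k}$, which the hockey-stick identity collapses to $\binom{n}{k+1}/\binom{n}{k} = (n-k)/(k+1)$.

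The final step is normalization. Reading the paper's definition of $R$ so that the error lies in $[0,1]$ relative to its maximum achievable value on $\X$, one obtains $\mathrm{E}[R] = (n-k)/\bigl((k+1)(n-1)\bigr)$, which equals $1/(k+1)$ up to the factor $(n-k)/(n-1)$; in the big-data regime $n \gg k$ this factor is $1$, recovering the stated theorem. The hard part will not be the combinatorics but pinning down this normalization: the literal expression $|S|-1$ in the paper's definition does not give $1/(k+1)$ in finite $n$, so I would first fix an unambiguous convention (most naturally, denominator $|\X|-1$ with the result read asymptotically, or an explicit $n\to\infty$ limit) and then invoke the identity $\mathrm{E}[R^{\mathrm{raw}}] = (n-k)/(k+1)$ to conclude.
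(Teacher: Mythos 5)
Your core computation is correct and arrives at exactly the paper's key intermediate result, $\mathbb{E}[R]=\frac{n-k}{k+1}$, but by a genuinely different and slicker route. The paper writes the expectation directly against the probability mass function, $\mathbb{E}[R]=\sum_i (i-1)\binom{n-i}{k-1}/\binom{n}{k}$, and then spends the entire Appendix 6.2 reducing this sum by repeated application of Pascal's identity (telescoping the two pieces separately and recombining) to reach $\binom{n}{k+1}/\binom{n}{k}$. Your gap/symmetry argument — viewing the $k$-subset as inducing $k+1$ exchangeable gaps summing to $n-k$, so the leading gap (which is the raw rank error) has expectation $(n-k)/(k+1)$ — gets there in one line, and your tail-sum-plus-hockey-stick derivation is essentially a compressed version of what the appendix does by hand. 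Both of your routes buy a much shorter and more transparent proof; the paper's buys nothing extra beyond being elementary term manipulation.

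On the normalization you flag: you are right that the paper's displayed definition $R=\frac{\text{rank of the highest ranked element in }S}{|S|-1}$ is inconsistent with the theorem as stated, but note that the paper's own resolution, given immediately after the proof, is to divide the expected raw error by the \emph{worst possible error} $n-k$, yielding $E=\frac{1}{n-k}\mathbb{E}[R]=\frac{1}{k+1}$ exactly, with no asymptotics needed. Your choice of denominator $n-1$ gives $\frac{n-k}{(k+1)(n-1)}$ and recovers $\frac{1}{k+1}$ only in the regime $n\gg k$, so under the paper's convention your final step is slightly off target even though the substantive combinatorics is identical; adopting the $n-k$ normalization closes that cosmetic gap.
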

\begin{proof}
The probability that the optimal element $x^*$ is included in $\SB$ is denoted by $P(x^* \in \SB)$ and is given by -
\begin{equation}
P(x^* \in \SB) = \frac{{n-1 \choose k-1}}{{n \choose k}} 
\end{equation}
Similarly, the probability that $x^*$ is not chosen in $\SB$ but the next best $x$ denoted by $x^*_2$ is chosen is given by -
\begin{equation}
P(x^*_2 \in \SB, x^* \notin \SB) = \frac{{n-2 \choose k-1}}{{n \choose k}} 
\end{equation}
Hence the expected rank error is given by -
\begin{eqnarray}
\mathop{\mathbb{E}}[R] &=&  \sum_{i = 1 \hdots k}{{P(x^*_{i} \in \SB, \{x^*_1, \hdots x^*_{i-1}\} \notin \SB)(i-1)}} \\
&=& \sum_{i = 1 \hdots n-k+1}{\frac{{n-i \choose k-1}}{{n \choose k}}(i-1)}
\label{eqn:Et}
\end{eqnarray}
We can simplify this to $\mathop{\mathbb{E}}[R] = \frac{n-k}{k+1}$. (proof in Appendix Section 6.2)
\end{proof}
Hence from the theorem, we find that if we were to choose the subset of candidate split points from the data in a random fashion, then the expected error of the rank of the best element in the chosen subset would be inversely related to the size of the subset chosen. For ease of analysis, we normalise the expected error by dividing the expected error by the worst possible error $(n-k)$ to get normalised error $E$ as -
\begin{equation}
E = \frac{1}{n-k}\mathop{\mathbb{E}}[R] = \frac{1}{k+1}
\end{equation}

\subsection{Deterministic selection of bins}
There are other possible ways of choosing $\SB$ using some of the popular methods of quantile computations. These algorithms provide a good approximation of the entire data $\X$ efficiently. In this discussion, however, our focus is on building decision trees with these approximated sketches and belong to the category of {\em data faithful} approaches. The argument we make here is that it is not enough to have an approximation of the big data to ensure a good decision tree being built. Instead, we need an approximation of the data that would be faithful to $f$, which would ensure that the candidate split points obtained from the data approximation has a good probability of including the points that have the best split candidates. XGBoost uses a modification of GK summary\cite{gk}, and CATBoost uses a fixed number of bins with fixed ranges to quantize the data. 

We first consider a simplified variant of the approximation strategy used in the popular algorithm of XGBoost. XGBoost originally uses a weighted version of the GK summary to construct the approximate data. We consider the GK summary algorithm where there are no weights associated with the data points. Bins are then constructed in such a way that the rank of a point can be approximated such that the error in the rank is less than  $\epsilon$. Hence, we expect to have as many bins as $1/\epsilon$, which means that $\epsilon \approx 1/k$. In \cite{xgboost}, the authors prove that such an allocation of bins and consequent selection of $\SB$ results in a normalized error of less than $\epsilon$ for a query made for a quantile. In its simplest implementation, the quantile sketch is used to query for rank of the data in $\X$. However, in our case, we need to find the maximum element in $f$, which we cannot find using the quantile sketch unless $f$ is monotonic over $\X$. Hence, the algorithm is equivalent to a random selection of points since there is no correlation between the bins constructed out of $\X$ and $f$. 
In \fig{fig:randvsgk}, we compare the random selection of $\SB$ with the deterministic selection of bins. In the experiment, we sample $\X$ uniformly random and select $\SB$ first using random selection as described in \sect{sec:rand}. We approximate the data set by binning it into $k$ buckets, and we measure the expected rank error $E$ by averaging over many such runs. The expected error is measured over different bin sizes and is plotted in \fig{fig:randvsgk}. The plots in the figure support our claim that the random selection of $\SB$ and the deterministic selection of $\SB$  using the algorithm in \cite{xgboost} are not significantly different in terms of their rank errors. In XGBoost, the quantile sketch is modified to have weighted quantiles. The weights are designed to weigh points based on their contribution to the error, yet it still does not correlate with the objective used to decide on the split, hence it still would be \data than \objective.
We have illustrated the effect of \data on GK summary used in XGBoost, but this analysis applies to all other algorithms belonging to the \data class.
\begin{figure}
\includegraphics[width=0.35\textwidth]{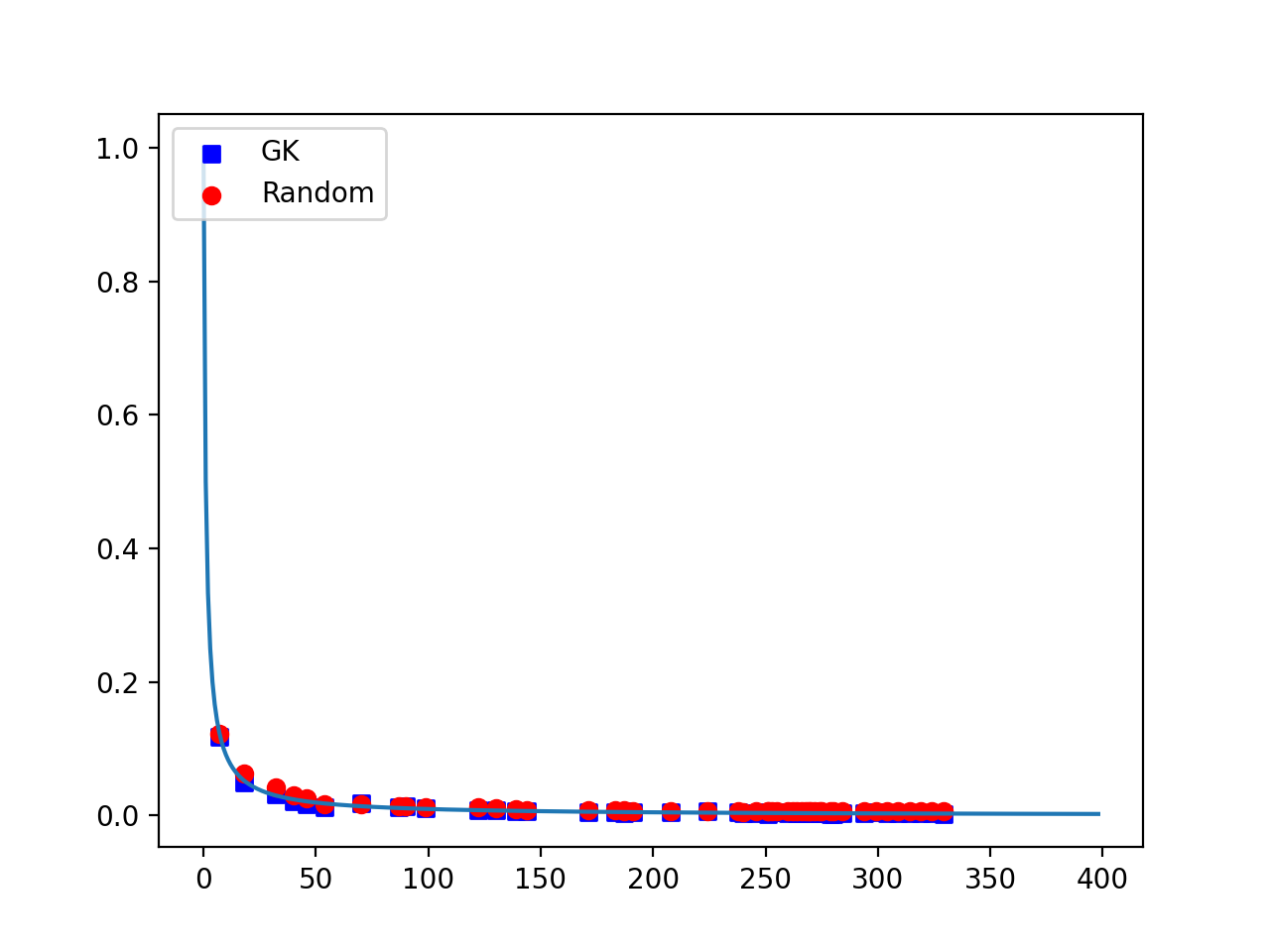}
\caption{Comparison of rank error for random selection vs GK summary. The points denote the mean error for the rank of selected point. The bold line corresponds to $1/(k+1)$}
\label{fig:randvsgk}
\end{figure}

\section{Evaluations}
\label{sec:eval}
In our experiments, we compare the accuracy achieved by XGBoost using a simple random sampling of points versus XGBoost using weighted quantile building algorithm. Random sampling is incorporated in XGBoost by performing local sampling during data reading and performing global sampling during split point proposal. Since building the subset primarily affects the split point chosen and can result in a non-greedy decision tree being built, so, as a first step we compare the quality of the decision tree (1 tree XGBoost) built between the two methods. The accuracies (for classification task) / mean absolute percentage error (MAPE for time series regression task) are compared on datasets\cite{noniiddata}\cite{uci} described in Table \ref{tab:desc} and the results are presented in Table \ref{tab:res}. For both of the methods, we used default settings for the parameters and distributed the computation over $30$ workers. The number of bins used for the experiments is set based on the training dataset size. The evaluations using different bins have been reported as an average over $5$ runs. Using a similar setup we also built an ensemble of $20$ decision trees ($50$ for regression) using XGBoost.

From the results on different datasets shown in Table 2, it is clear that by using a simple random selection of candidate split points for building decision trees, we can achieve the same levels of accuracy as using sophisticated quantile building algorithm thus validating our claim. We can see in Table 2 that the time taken by random sampling is substantially lesser than using quantile approximation as would be expected. We also observe that random sampling is able to handle non-iid data and is performing equivalent to quantile algorithm. We also measured the variance of accuracies/errors across runs and observed that the variance is $< 0.001$, so we can consider random sampling to give stable boosting results. By using a simpler method, we are able to improve the computational efficiency of the algorithm which is important while processing big data. 

\begin{table}
\footnotesize
\begin{tabular}{|c|c|c|c|}
\hline
Dataset & \#features &  \#train &  \#test  \\
\hline
Wiretap\cite{Mirsky2018KitsuneAE} (WT - Class) & 115 & 200000 & 50000\\
\hline
Mirai\cite{Mirsky2018KitsuneAE} (MI - Class) & 115 & 563137 & 100000\\
\hline
SUSY\cite{higgs} (SU - Class) & 18 & 4500000 & 500000\\
\hline 
Hepmass\cite{hepmass} (HM - Class) & 28 & 7000000 & 3500000\\ 
\hline
Higgs\cite{higgs} (HI - Class) & 28 & 10500000 & 500000\\
\hline
PJM East\cite{noniiddata} (PJM - Reg) & 10 & 110000 & 35366\\ 
\hline
Dominion Vir \cite{noniiddata} (DOM - Reg) & 10 & 84750 & 31439\\
\hline
\end{tabular}
\caption{Datasets (Class: classification, Reg: regression)}
\label{tab:desc}
\end{table}

\begin{table}
\footnotesize
\begin{tabular}{|c|c|c|c|c|c|c|c|}
\hline
Data & Bins & DT(S) &  DT(Q) & XGB(S) &  XGB(Q) & T(S) & T(Q) \\
\hline
WT & 10 & 0.994 & 0.995 & 0.997 & 0.995   & 11.5 & 21.2  \\ 
\hline
 & 20 & 0.989 & 0.995 & 0.997 & 0.996 & 12.3 & 23.7 \\ 
\hline
 & 50 & 0.996 & 0.996 & 0.997 & 0.997  & 17.5 & 25.7 \\ 
\hline
 & 100 & 0.996 & 0.997 & 0.998 & 0.998  & 24.7 & 33.2 \\ 
\hline
MI & 10 & 0.972 & 0.986 & 0.992 & 0.992  & 11.2 & 74.3 \\ 
\hline
 & 20 & 0.982 & 0.985 & 0.993  & 0.992  & 13.0 & 76.0 \\ 
\hline
 & 50 & 0.985 & 0.992 & 0.993 & 0.993  & 17.8 & 76.8 \\ 
\hline
 & 100 & 0.991 & 0.992 & 0.993 & 0.993 &  26.3 & 78.8 \\ 
\hline
SU & 10 & 0.77 & 0.772 & 0.785 & 0.785  & 36.7 & 106.2 \\ 
\hline
 & 100 & 0.773 & 0.772 & 0.785 & 0.784  & 30.7 & 102.0 \\ 
\hline
 & 500 & 0.772 & 0.772 & 0.785 & 0.785  & 43.0 & 104.2 \\ 
\hline
 & 1000 & 0.773 & 0.772 & 0.784 & 0.784  & 60.2 & 111.3 \\ 
\hline
HM & 10 & 0.837 & 0.837 & 0.855 & 0.854  & 43.7 & 218.8 \\ 
\hline
 & 100 & 0.837 & 0.836 & 0.854 & 0.854  & 41.2 & 209.2 \\ 
\hline
 & 500 & 0.837 & 0.837 & 0.855 & 0.854  & 56.0 & 228.5 \\ 
\hline
 & 1000 & 0.836 & 0.837 & 0.855 & 0.855  & 81.5 & 225.2 \\ 
\hline
HI & 10 & 0.675 & 0.677 & 0.707 & 0.707  & 69.8 & 401.8 \\ 
\hline
 & 100 & 0.678 & 0.679 & 0.707 & 0.707  & 70.2 & 403.7 \\ 
\hline
 & 500 & 0.679 & 0.679 & 0.706 & 0.708  & 83.7 & 401.8 \\ 
\hline
 & 1000 & 0.679 & 0.679 & 0.708 & 0.708  & 106.8 & 410.5 \\ 
\hline
PJM & 10 & 89.422 & 89.419 & 10.838 & 10.906  & 1.0  & 6.3  \\ 
\hline
 & 20 & 89.424 & 89.424 & 10.835 & 10.824 & 1.4& 7.1 \\ 
\hline
 & 50 & 89.423 & 89.424 & 10.839 & 10.811 & 1.6 & 7.5 \\ 
\hline
 & 100 & 89.421 & 89.422 & 10.832 & 10.818 & 1.3 & 5.7 \\ 
\hline
DOM & 10 & 90.063 & 89.991 & 14.074 & 14.087 & 0.9 & 4.9 \\ 
\hline
 & 20 & 90.007 & 89.901 & 14.100 & 14.008 & 1.4 & 4.4 \\ 
\hline
 & 50 & 89.908 & 89.872 & 14.046 & 13.999 & 1.3 & 5.0 \\ 
\hline
 & 100 & 89.985 & 89.863 & 14.056 & 13.960 & 1.5 & 4.8 \\ 
\hline
\end{tabular}
\caption{Table presents accuracy/error of Decision Tree(DT), XGBoost(XGB) and time taken (T in ms) for sampling (S) vs deterministic quantile building (Q)}
\label{tab:res}
\end{table}

\section{Conclusion and future work}
In this paper, we critically examine the methods used to build decision trees using a quantile sketch algorithm to approximate the data. We proved that just having \data methods that approximate the data distribution without linking it with the objective function $f$ does not help in providing any improvement over a random selection of candidate split points. Using this strong result, we argue and empirically prove that using a random selection of candidate split points provides the same level of accuracy as a sophisticated algorithm for finding split points. A random selection of points would be more time-efficient compared to other methods providing gains in running boosting algorithms. In terms of the complexity of the software, implementing a distributed decision tree based on a random selection is much simpler than sophisticated quantile building algorithms.

In future, we would like to develop \objective algorithms that are also as computationally efficient as the \data algorithms. We would also like to modify other GBDT methods like CATBoost and LightGBM to use random sampling.

\bibliographystyle{ACM-Reference-Format}
\bibliography{biblio}

\section{Appendix}
\subsection{Incorporating Random Sampling in XGBoost}
\label{sec:sampl}
This section will describe how we replaced the quantile algorithm with the random sampling algorithm in XGBoost. During the first step of data reading, each node in a distributed setting randomly samples from the local data it reads. If $b$ bins are given as input for each feature, the sample size read per feature by each node will be $b$. After reading the data, the boosting starts. We need to propose candidate split points for each iteration from which the best split point will be found. This proposal can be done by weighted quantile sketch or using random sampling. Now it is required that all nodes have the same set of candidate split points. So, after each node proposes its local candidate split points (in case of random sampling local proposal done during data reading), an AllReduce\cite{Chen2015RABITA} operation is called (AllReduce is reducing and then broadcasting). In case of random sampling, all reduce will combine the samples and then sample from the set again to ensure that the sample size for a feature is at most $b$. After broadcasting the samples, the further steps are shared with what is done in XGBoost. The pseudo code of the distributed XGBoost algorithm is given in Algorithm \ref{alg:xgboost}.

\begin{algorithm}
\caption{XGBoost distributed algorithm}
\begin{algorithmic} 
\STATE $dmatrix \leftarrow$ Local data loaded and local sampling done by each node in distributed mode 
\STATE // Done by each node
\FOR{$i = 0; i < num\_iterations; i++$} 
\STATE $tree \leftarrow $ Init tree for boosting round
\STATE  $candidate\_split\_points \leftarrow$  \textbf{quantiles /random sampling}
\FOR{$d = 0; d < max\_depth; d++$} 
\STATE $hist \leftarrow $ gradient statistics computation
\STATE $splits \leftarrow $ find split for this depth
\ENDFOR
\ENDFOR
\end{algorithmic}
\label{alg:xgboost}
\end{algorithm}

\subsection{Proof for theorem \ref{thm1}}
We start with \eref{eqn:Et},
\begin{eqnarray}
E &=& \frac{1}{{n \choose k}}\sum_{r = 0}^{n-1}{(n - r - 1){r \choose k-1}}
\end{eqnarray}
where $n - i = r$.
Now, ${a \choose b} = 0$ for $a < b$\\
\\
So, ${r \choose k-1} = 0$ for $r < k - 1$\\
\begin{eqnarray}
E &=& \frac{1}{{n \choose k}}\sum_{r=k-1}^{n-1}{(n - r - 1){r \choose k-1}}\\
&=& \frac{1}{{n \choose k}}\left[\sum_{r=k-1}^{n-1}{(n - 1){r \choose k-1}} - \sum_{r=k-1}^{n-1}{r{r \choose k-1}}\right]\\
&=& \frac{1}{{n \choose k}}\left[(n - 1)\sum_{r=k-1}^{n-1}{{r \choose k-1}} - \sum_{r=k-1}^{n-1}{r{r \choose k-1}}\right]
\label{eqn:E}
\end{eqnarray}
There are two parts of this equation which will be simplified separately.
The first part is: 
\begin{equation}
(n-1)\sum_{r=k-1}^{n-1}{{r \choose k-1}}
\label{eqn:firstpart}
\end{equation}
For simplifying the first part, we would be using the following identity repeatedly:
\begin{equation}
{x \choose y} + {x \choose y+1} = {x+1 \choose y+1}
\label{eqn:iden}
\end{equation}
We expand \eref{eqn:firstpart},
\begin{eqnarray}
\sum_{r=k-1}^{n-1}{{r \choose k-1}} &=& {k-1 \choose k-1} + {k \choose k-1} \hdots {n-1 \choose k-1} \\
&=& {k \choose k} + {k \choose k-1} \hdots {n-1 \choose k-1} 
\label{eqn:kchoosek}
\end{eqnarray}
where we have used the fact that ${k-1 \choose k-1} = {k \choose k}$. Applying \eref{eqn:iden} to the first two terms of \eref{eqn:kchoosek} we get,
\begin{equation}
\sum_{r=k-1}^{n-1}{{r \choose k-1}} = {k+1 \choose k} + {k+1 \choose k-1} + {k+2 \choose k-1} \hdots {n-1 \choose k-1}\\
\end{equation}
Repeating the process over the terms we get,
\begin{equation}
\sum_{r=k-1}^{n-1}{{r \choose k-1}} = {n \choose k}
\label{eqn:t1}
\end{equation}
We can now simplify the second part of the equation,
\begin{equation}
\sum_{r=k-1}^{n-1}{r{r \choose k-1}}
\end{equation}
For this too, we will start by expanding it into a series:
\begin{equation}
\sum_{r=k-1}^{n-1}{r{r \choose k-1}} = (k-1){k-1 \choose k-1} + k{k \choose k-1} \hdots (n-1){n-1 \choose k-1}
\end{equation}
Let $x = n-k$, and using the property ${a \choose b} = {a \choose a-b}$\\
\begin{equation}
\sum_{r=k-1}^{n-1}{r{r \choose k-1}} = (k-1){k-1 \choose 0} + k{k \choose 1} + \hdots + (k-1+x){k-1+x \choose x}
\end{equation}
which can be regrouped as,
\begin{multline}
= (k-1)\left[ {(k-1) \choose 0} + {k \choose 1} + \hdots + {k-1+x \choose x} \right] + \\
 \left[ {k \choose 1} + {k+1 \choose 2} + \hdots + {k-1+x \choose x} \right]\\
+ \left[ {k+1 \choose 2} + {k+2 \choose 3} + \hdots + {k-1+x \choose x} \right] + \hdots + \left[ {k-1+x \choose x} \right]
\label{eqn:mult}
\end{multline}
Now,
\begin{equation}
{k-1 \choose 0} + {k \choose 1} + \hdots + {k-1+x \choose x} = {k \choose 0} + {k \choose 1} + \hdots + {k-1+x \choose x}
\end{equation}
Simplifying using \eref{eqn:iden} recursively,
\begin{equation}
{k \choose 0} + {k \choose 1} + \hdots + {k-1+x \choose x} = {k+x \choose x} = Z
\label{eqn:defz}
\end{equation}
Simplifying \eref{eqn:mult} in terms of Z, we get,
\begin{multline*}
= (k-1)Z + \left[ Z - {k-1 \choose 0} \right] + \left[ Z - \left( {k-1 \choose 0} + {k \choose 1} \right) \right]\\
+ \hdots + \left[ Z - \left( {k-1 \choose 0} + {k \choose 1} + \hdots + {k+x-2 \choose x-1}  \right)\right]\\
\end{multline*}
\begin{multline*}
= (k-1)Z + xZ - \left[ {k-1 \choose 0} +\hdots + \left( {k-1 \choose 0} + \hdots+ {2*(k-1) \choose k-1} \right) \right]\\
\end{multline*}
Simplifying using \eref{eqn:iden} recursively,
\begin{eqnarray}
&=& (k-1+x)Z - \left[ {k \choose 0} + \hdots + {k-1+x \choose x-1} \right]\\
&=& (k-1+x)Z - \left[ {k+1 \choose 0}  + \hdots + {(k+1)+(x-2) \choose x-1}  \right]
\end{eqnarray}
Using \eref{eqn:iden} to simplify,
\begin{eqnarray}
&=& (k-1+x)Z - {(k+1) + (x-2) + 1 \choose x-1}\\
&=& (k-1+x)Z - {k+x \choose x-1}
\end{eqnarray}
Replace $x=n-k$ and Z from \eref{eqn:defz},
\begin{eqnarray}
&=& (n-1){n \choose n-k} - {n \choose n-k-1}\\
\sum_{r=k-1}^{n-1}{r{r \choose k-1}} &=& (n-1){n \choose n-k} - {n \choose n-k-1}
\label{eqn:second}
\end{eqnarray}
Now simplifying \eref{eqn:E} using \eref{eqn:t1} and \eref{eqn:second},
\begin{eqnarray}
E &=& \frac{1}{{n \choose k}}\left[ (n-1){n \choose k} - \left( (n-1){n \choose k} - {n \choose k+1} \right) \right]\\
&=& \frac{{n \choose k+1}}{{n \choose k}}\\
E &=& \frac{n-k}{k+1}
\end{eqnarray}

\end{document}